\newcommand{\tup}[1]{{\langle #1 \rangle}}
\newtheorem{definition}{Definition}
\newtheorem{theorem}{Theorem}
\title{Generalized Planning as Heuristic Search}
\author{
\textsuperscript{\rm 1}Javier Segovia-Aguas, \textsuperscript{\rm 2}Sergio Jim\'enez and \textsuperscript{\rm 1}Anders Jonsson\\ 
}
\begin{document}

\maketitle

\begin{abstract}
Although {\em heuristic search} is one of the most successful approaches to classical planning, this planning paradigm does not apply straightforwardly to {\em Generalized Planning} (GP). {\em Planning as heuristic search} traditionally addresses the computation of sequential plans by searching in a grounded state-space. On the other hand GP aims at computing algorithm-like plans, that can branch and loop, and that generalize to a (possibly infinite) set of classical planning instances. This paper adapts the {\em planning as heuristic search} paradigm to the particularities of GP, and presents the first native heuristic search approach to GP. First, the paper defines a novel GP solution space that is independent of the number of planning instances in a GP problem, and the size of these instances. Second, the paper defines different evaluation and heuristic functions for guiding a combinatorial search in our GP solution space. Lastly the paper defines a GP algorithm, called Best-First Generalized Planning (BFGP), that implements a best-first search in the solution space guided by our evaluation/heuristic functions. 
\end{abstract}

\section{Introduction}
{\em Heuristic search} is one of the most successful approaches to classical planning~\cite{bonet2001planning,hoffmann2001ff,Helmert:FD:JAIR06,richter2010lama,lipovetzky2017best}. Unfortunately, it is not straightforward to adopt state-of-the-art search algorithms and heuristics from classical planning to {\em Generalized Planning} (GP). The {\em planning as heuristic search} approach traditionally addresses the computation of sequential plans with a grounded state-space search. Generalized planners reason however about the synthesis of algorithm-like solutions that, in addition to action sequences, contain branching and looping constructs. Furthermore, GP aims to synthesize solutions that generalize to a (possibly infinite) set of planning instances (where the domain of variables may be large), making the grounding of classical planners unfeasible~\cite{Winner03distill:learning,Levesque:GPlanning:IJCAI11,Zilberstein:Gplanning:icaps11,Giacomo:FSM:ICAPS13,bonet2019learning,illanes2019generalized}. 

This paper adapts the {\em planning as heuristic search} paradigm to the particularities of GP, and presents the first native heuristic search approach to GP. Given a GP problem, that comprises an input set of classical planning instances from a given domain,  our {\em GP as heuristic search} approach computes an algorithm-like plan that solves the full set of input instances. The contributions of the paper are three-fold:
\begin{enumerate}
	\item {\em A tractable solution space for GP}. We leverage the computational models of the {\em Random-Access Machine}~\cite{skiena1998algorithm} and the {\em Intel x86} FLAGS register~\cite{dandamudi2005installing} to define an innovative solution space that is independent of the number of input planning instances in a GP problem, and the size of these instances (i.e. the number of state variables and their domain size).
	\item {\em Grounding-free evaluation/heuristic functions} for GP. We define several evaluation and heuristic functions to guide a combinatorial search in our GP solution space.  Evaluating these functions does not require to ground states/actions in advance, so they allow to address GP problems where state variables have large domains (e.g. integers). 
	\item {\em A heuristic search algorithm for GP}. We present the {\sc BFGP} algorithm that implements a best-first search in our GP solution space. Experiments show that {\sc BFGP} significantly reduces the CPU-time required to compute and validate generalized plans, compared to the classical planning compilation approach to GP~\cite{segovia2019computing}.
\end{enumerate}

\section{Background}
This section introduces the necessary notation to define our heuristic search approach to generalized planning.
\subsection{Classical Planning}
Let $X$ be a set of {\em state variables}, each $x\in X$ with domain $D_x$. A {\em state} is a total assignment of values to the set of state variables, i.e.~$\tup{x_0=v_0, \ldots, x_N=v_N}$ such that for every $x_i\in X$ then $v_i\in D_{x_i}$. For a variable subset $X'\subseteq X$, let $D[X']=\times_{x\in X'} D_x$ denote its joint domain. The state space is then $S=D[X]$. Given a state $s\in S$ and a subset of variables $X'\subseteq X$, let $s_{|X'}=\tup{x_i=v_i}_{x_i\in X'}$ be the {\em projection} of $s$ onto $X'$ i.e.~the partial state defined by the values that $s$ assigns to the variables in $X'$. The {\em projection} of $s$ onto $X'$ defines the subset of states $\{s \mid s \in S, s_{|X'}\subseteq s\}$ that are consistent with the corresponding partial state.

Let $A$ be a set of deterministic {\em lifted actions}. A lifted action $a\in A$ has an associated set of variables $par(a)\subseteq X$, called {\em parameters}, and is characterized by two functions: an {\em applicability function} $\rho_a: D[par(a)] \rightarrow \{0,1\}$, and a {\em successor function} $\theta_a: D[par(a)]\rightarrow D[par(a)]$. Action $a$ is applicable in a given state $s$ iff $\rho_a(s_{|par(a)})$ equals $1$, and results in a {\em successor} state $s'=s\oplus a$, that is built replacing the values that $s$ assigns to variables in $par(a)$ with the values specified by $\theta_a(s_{|par(a)})$. Lifted actions generalize actions with {\em conditional effects}. These actions are common in GP since their state-dependent outcomes allow to adapt generalized plans to different classical planning instances. 

A {\em classical planning instance} is a tuple $P=\tup{X,A,I,G}$, where $X$ is a set of state variables, $A$ is a set of lifted actions, $I\in S$ is an initial state, and $G$ is a goal condition on the state variables that induces the subset of {\em goal states} 
$S_G = \{s \mid s \vDash G, s \in S\}$. Given $P$, a {\em plan} is an action sequence $\pi=\tup{a_1, \ldots, a_m}$ whose execution induces a {\em trajectory} $\tau=\tup{s_0, a_1, s_1, \ldots, a_m, s_m}$ such that, for each $1\leq i\leq m$, $a_i$ is applicable in $s_{i-1}$ and results in the successor state $s_i=s_{i-1}\oplus a_i$. A plan $\pi$ {\em solves} $P$ if the execution of $\pi$ in $s_0=I$ finishes in a goal state, i.e.~$s_m\in S_G$. We say $\pi$ is {\em optimal} if $|\pi|=m$ is minimal among the plans that solve $P$.

\subsection{Generalized Planning}
{\em Generalized planning} is an umbrella term that refers to more general notions of planning~\cite{jimenez2019review}. This work builds on top of the inductive formalism for GP, where a GP problem is defined as a set of classical planning instances with a common structure.

\begin{definition}[GP problem with shared state variables]
	\label{def:gp-problem}
A {\em GP problem} is a finite and non-empty set of $T$ classical planning instances $\mathcal{P}=\{P_1,\ldots,P_T\}$, where each classical planning instance $P_t\in\mathcal{P}$,  {\small $1\leq t \leq T$}, shares the same sets of state variables $X$ and lifted actions $A$, but may differ in the initial state and goals. Formally, $P_1=\tup{X,A,I_1,G_1}, \ldots, P_T=\tup{X,A,I_T,G_T}$.
\end{definition}

The representations of GP solutions range from {\em programs}~\cite{Winner03distill:learning,segovia2019computing} and {\it generalized policies}~\cite{Geffner:Gpolicies:AppliedI04,frances2019generalized}, to {\em finite state controllers}~\cite{Geffner:FSM:AAAI10,javi-Gplanning-IJCAI16}, {\em formal grammars} or {\em hierarchies} ~\cite{nau:shop2:JAIR03,ramirez2016heuristics,segovia2017generating}. Each representation has its own expressiveness capacity, as well as its own validation/computation complexity. We can however define a common condition under which a generalized plan is considered a solution to a GP problem. First, let us define $exec(\Pi,P)=\tup{a_1, \ldots, a_m}$ as the sequential plan produced by the execution of a generalized plan $\Pi$, on a classical planning instance $P=\tup{X,A,I,G}$.

\begin{definition}[GP solution]
	\label{def:gp-solution}
	A {\em generalized plan} $\Pi$ solves a GP problem $\mathcal{P}=\{P_1,\ldots,P_T\}$ iff, for every classical planning instance $P_t\in \mathcal{P}$, $ 1\leq t\leq T$, it holds that $exec(\Pi,P_t)$ solves $P_t$.
\end{definition}

We define the cost of a GP solution as the sum of the costs of the sequential plans that are produced for each of the classical planning instances in a GP problem. Formally,  $cost(\Pi,\mathcal{P})=\sum_{P_t\in\mathcal{P}}|exec(\Pi,P_t)|$. A GP solution $\Pi$ is optimal if $cost(\Pi,\mathcal{P})$ is minimal among  the generalized plans that solve $\mathcal{P}$.

{\bf Example}. Figure~\ref{fig:cp-example} shows the initial state and goal of two instances, $P_1=\tup{X,A,I_1,G_1}$ and $P_2=\tup{X,A,I_2,G_2}$, for sorting two six-element lists. Both instances share the set of state variables $X=\{x_1, x_2, x_3 ,x_4, x_5, x_6\}$ that encode six integers. They also share the set of lifted actions $A$, with $\frac{6\times 5}{2}$  $swap(x_i,x_j)$ actions that swap the content of two list positions {\small $i< j$}. An example solution plan for $P_1$ is  $\pi_1=\tup{swap(x_1,x_6), swap(x_2,x_3), swap(x_2,x_4)}$ while $\pi_2=\tup{swap(x_1,x_3), swap(x_4,x_6)}$ is a solution for $P_2$. The set $\mathcal{P}=\{P_1,P_2\}$ is an example GP problem. Sorting instances with different list sizes is made possible by including enough state variables to represent the elements of the longest list, and an extra state variable for the list length; e.g. $\mathsf{length}=l$ implies that list positions $[0,l-1]$ are valid.

\begin{figure}
	\centering
	\begin{tikzpicture}
	\draw[draw=black,step=0.5cm] (0.0,0.0) grid (3.0,0.5);
	\draw[draw=black,step=0.5cm] (4.0,0.0) grid (7.0,0.5);
	\draw[draw=black] (4.0,0.0) -- (4.0,0.5);
	\draw[draw=black,step=0.5cm] (0.0,1.0) grid (3.0,1.5);
	\draw[draw=black,step=0.5cm] (4.0,1.0) grid (7.0,1.5);
	\draw[draw=black] (4.0,1.0) -- (4.0,1.5);
	\draw[draw=black] (0.0,1.0) -- (3.0,1.0);
	\draw[draw=black] (4.0,1.0) -- (7.0,1.0);
	\draw[->,black] (3.1,0.25) -- (3.9,0.25);
	\draw[->,black] (3.1,1.25) -- (3.9,1.25);
	\node at (1.5,2.0) {\textbf{Initial State}};
	\node at (5.5,2.0) {\textbf{Goal State}};
	\node at (-0.5,0.25) {$P_2$};    
	\node at (0.25,0.25) {3};
	\node at (0.75,0.25) {2};
	\node at (1.25,0.25) {1};
	\node at (1.75,0.25) {6};
	\node at (2.25,0.25) {5};
	\node at (2.75,0.25) {4};
	\node at (4.25,0.25) {1};
	\node at (4.75,0.25) {2};
	\node at (5.25,0.25) {3};
	\node at (5.75,0.25) {4};
	\node at (6.25,0.25) {5};
	\node at (6.75,0.25) {6};
	\node at (-0.5,1.25) {$P_1$};        
	\node at (0.25,1.25) {6};
	\node at (0.75,1.25) {3};
	\node at (1.25,1.25) {4};
	\node at (1.75,1.25) {2};
	\node at (2.25,1.25) {5};
	\node at (2.75,1.25) {1};
	\node at (4.25,1.25) {1};
	\node at (4.75,1.25) {2};
	\node at (5.25,1.25) {3};
	\node at (5.75,1.25) {4};
	\node at (6.25,1.25) {5};
	\node at (6.75,1.25) {6};
	\end{tikzpicture}
	\caption{Classical planning instances for sorting the content of two six-element lists by swapping the list elements.}
	\label{fig:cp-example}
\end{figure}
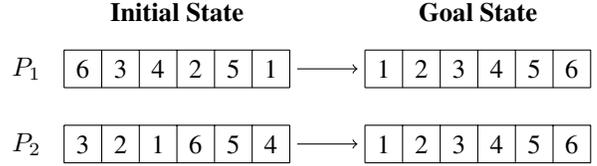

\subsection{Planning Programs}
\label{sec:planning-programs}
In this work we represent GP solutions as {\em planning programs}~\cite{segovia2019computing}. The execution of a planning program on a classical planning instance is a deterministic procedure that requires no variable matching; this is a useful property when implementing native heuristics for the space of GP solutions.

\subsubsection{Formalization.} A {\em planning program} is a sequence of $n$ instructions  $\Pi=\tup{w_0,\ldots,w_{n-1}}$, where each instruction $w_i\in \Pi$ is associated with a {\em program line} {\small $0\leq i< n$} and is either: 
\begin{itemize}
	\item A {\em planning action} $w_i\in A$.
	\item A {\em goto instruction} $w_i=\mathsf{go}(i',!y)$, where $i'$ is a program line $0\leq i'<i$ or $i+1<i'< n$, and $y$ is a proposition.
	\item A {\em termination instruction} $w_i=\mathsf{end}$. The last instruction of a program $\Pi$ is always  $w_{n-1}=\mathsf{end}$.
\end{itemize}

The execution model for a planning program  is a {\em program state} $(s,i)$, i.e.~a pair of a planning state $s\in S$ and program counter $0\leq i< n$. Given a program state $(s,i)$, the execution of a programmed instruction $w_i$ is defined as:
\begin{itemize}
	\item If $w_i\in A$, the new program state is $(s',i+1)$, where $s'=s\oplus w_i$ is the {\em successor} when applying $w_i$ in $s$.
	\item If $w_i=\mathsf{go}(i',!y)$, the new program state is $(s,i+1)$ if $y$ holds in $s$, and $(s,i')$ otherwise\footnote{We adopt the convention of jumping to line $i'$ whenever $y$ is {\em false}, inspired by jump instructions in the {\em Random-Access Machine} that jump when a register equals zero.}. Proposition $y$ can be the result of an arbitrary expression on state variables, e.g.~high-level state features~\cite{lotinac2016automatic}. 
	\item If $w_i=\mathsf{end}$, program execution terminates. 
\end{itemize}

To execute a planning program $\Pi$ on a planning instance $P=\tup{X,A,I,G}$, the initial program state is set to $(I,0)$, i.e.~the initial state of $P$ and the first program line of $\Pi$. A program $\Pi$ {\em solves} $P$ iff execution terminates in a program state $(s,i)$ that satisfies the goal condition, i.e.~$w_i=\mathsf{end}$ and $s\in S_G$. If a planning program {\em fails} to solve the instance, the only possible sources of failure are: (i) {\em incorrect program}, i.e.~execution terminates in a program state $(s,i)$ that does not satisfy the goal condition, i.e.~($w_i=\mathsf{end})\wedge (s\notin S_G)$; (ii) {\em inapplicable program}, i.e.~executing action $w_i\in A$ fails in program state $(s,i)$ since $w_i$ is not applicable in $s$; (iii) {\em infinite program}, i.e.~execution enters into an infinite loop that never reaches an $\mathsf{end}$ instruction.

\subsubsection{The space of planning programs.}
Given a maximum number of program lines $n$, where the last instruction is $w_{n-1}=\mathsf{end}$, and defined the propositions of goto instructions as $(x=v)$ atoms, where $x\in X$ and $v\in D_x$. The space of possible planning programs is compactly represented with three bit-vectors 
\begin{enumerate}
	\item The {\em action vector} of length $(n-1)\times |A|$, indicating whether action $a\in A$ appears on line $0\leq i< n-1$.
	\item The {\em transition vector} of length $(n-1)\times (n-2)$, indicating whether  $\mathsf{go}(i',*)$ appears on line $0\leq i< n-1$. 
	\item The {\em proposition vector} of length $(n-1)\times\sum_{x\in X}|D_x|$, indicating if $\mathsf{go}(*,!\tup{x=v})$ appears on line {\small $0\leq i< n-1$}.
\end{enumerate} 
A planning program is encoded as the concatenation of these three bit-vectors. The length of the resulting vector is: 
\begin{equation}
	(n-1) \left( |A|+ (n-2) + \sum_{x\in X}|D_x| \right).
	\label{eq:size}
\end{equation}

This encoding allows us to quantify the similarity of two planning programs (their {\em Hamming distance}) and more importantly, to implement a combinatorial search in the space of planning programs with $n$ lines. In fact, this is the encoding leveraged by the classical planning compilation approach to GP~\cite{segovia2019computing}. Equation~\ref{eq:size} reveals an important scalability limitation of this solution space that limits the applicability of the cited compilation to instances of small size: the number of planning programs with $n$ lines depends on the number, and domain size, of the state variables.

\section{A Tractable Solution Space for GP}
This section defines a scalable GP solution space of candidate {\em planning programs}, that is independent of the size of the input planning instances in a GP problem (i.e. the number of state variables and their domain size). Our approach is to extend the input classical planning instances of a GP problem with a general, and compact, feature language. This way the different classical planning instances in a given GP problem share a common set of features, even if their sets of states variables are actually different.

Our general and compact feature language leverages the computational model of the {\em Random-Access Machine} (RAM), an abstract computation machine that is polynomial-equivalent to a Turing machine~\cite{boolos2002computability}. The RAM machine enhances the {\em counter machine} with indirect memory addressing~\cite{minsky1961recursive}, and it allows augmented instruction sets and auxiliary {\em dedicated} registers, like the FLAGS register of the {\em Intel x86 microprocessors}~\cite{dandamudi2005installing}.

\subsection{Classical Planning with Pointers}

We extend the classical planning model with a RAM machine of $|Z|+2$ registers: $|Z|$ {\em pointers} that reference the original planning state variables, plus the {\em zero} and {\em carry} FLAGS (whose joint value will represent our common set of features for GP). Given a classical planning instance $P=\tup{X,A,I,G}$, the {\em extended classical planning instance} with a RAM machine of $|Z|+2$ registers is defined as $P_Z'=\tup{X_Z',A_Z',I_Z',G}$, where: 
\begin{itemize}
	\item The new set of {\bf state variables}
	$X_Z'$ comprises:
	\begin{itemize}
		\item The state variables $X$ of the original planning instance.
		\item Two {\em Boolean variables} $Y=\{y_z,y_c\}$, that play the role of the {\em zero} and  {\em carry} FLAGS register.
		\item The {\em pointers} $Z$, a non-empty set of extra state variables with finite domain {\small $[0,\ldots,|X|-1]$}.
	\end{itemize}
	\item The new set of actions $A_Z'$ includes: 
		\begin{itemize}
		\item The {\bf planning actions} $A'$ that result from abstracting each original planning action $a\in A$ into its corresponding action scheme. The abstraction is a two-step procedure that requires at least as many pointers as the largest arity of an action in $A$: (i), each action parameter is replaced with a pointer in $Z$ (ii), its applicability/successor function is rewritten to refer to the state variables pointed by the action parameters. For instance, the $\frac{6\times 5}{2}$  {\tt\small swap($x_i$,$x_j$)} actions from the example of Figure~\ref{fig:cp-example}, are abstracted into the  action scheme $\{${\tt\small swap($z_1$,$z_2$)} $| \; z_1,z_2 \in Z\}$ using two pointers, ~$Z=\{z_1,z_2\}$.  
	
		\item The {\bf RAM actions} that implement the following set of RAM instructions $\{{\tt\small inc}(z_1)$, ${\tt\small dec}(z_1)$, ${\tt\small cmp}(z_1,z_2)$, ${\tt\small cmp}(*z_1,*z_2)$, ${\tt\small set}(z_1,z_2)$ $| \; z_1,z_2 \in Z\}$ over the pointers in Z. Respectively, these RAM instructions increment/decrement a pointer, compare two pointers (or their content), and set the value of a pointer $z_1$ to another pointer $z_2$. Each RAM action also updates the {\sc flags} $Y=\{y_z,y_c\}$, according to the result of the corresponding RAM instruction (denoted here by {\tt\small res}):
\begin{small}
\begin{align*}
 y_z &:= ( res == 0 ),\\
 y_c &:= ( res > 0 ),\\
 inc(z_1) &\implies res := z_1 + 1,\\
 dec(z_1) &\implies res := z_1 - 1,\\
 cmp(z_1,z_2) &\implies res := z_1 - z_2,\\
 cmp(*z_1,*z_2) &\implies res := *z_1 - *z_2,\\ 
 set(z_1,z_2) &\implies res := z_2.
\end{align*}
\end{small}
\end{itemize}	
The number of actions of a  classical planning instance, extended with a RAM machine of $|Z|+2$ registers, is
\begin{equation}
	|A_Z'|=2|Z|^2+|A'|.
	\label{eq:nactions}
\end{equation}
The {\em increment}/{\em decrement} instructions induce $2|Z|$ actions, the {\em set} instructions over pointers induce $|Z|^2-|Z|$ actions, and {\em comparison} instructions induce $|Z|^2-|Z|$ actions; they can compare pointers or their content, but we only consider a single parameter ordering for symmetry breaking, i.e.~{\tt\small cmp($z_1$,$z_2$)} but not {\tt\small cmp($z_2$,$z_1$)}. 

	\item The new {\bf initial state} $I_Z'$ is the initial state of the original planning instance, but extended with all {\em pointers} set to zero and {\em FLAGS} set to {\tt\small False}. The {\bf goals} are the same as those of the original planning instance.
\end{itemize}

\begin{theorem}
Given a classical planning instance $P$, its extension $P'_Z$, with a RAM machine of $|Z|+ 2$ registers, preserves the completeness and soundness of $P$.
\end{theorem}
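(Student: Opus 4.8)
The plan is to show that $P$ and $P'_Z$ are solution-equivalent: every plan for $P$ can be simulated by a plan for $P'_Z$ (completeness, so the extension loses no solution) and, conversely, every plan for $P'_Z$ projects down to a plan for $P$ (soundness, so the extension introduces no spurious solution). The central tool is the projection $s\mapsto s_{|X}$ that discards the pointer variables $Z$ and the FLAGS $Y$ and retains only the original state variables; since the goal condition $G$ is a condition on $X$ alone, a state of $P'_Z$ is a goal state exactly when its $X$-projection satisfies $G$ in $P$.

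I would first establish two structural observations about the two kinds of actions in $A'_Z$. First, each RAM action ($\mathsf{inc}$, $\mathsf{dec}$, $\mathsf{cmp}$, $\mathsf{set}$) modifies only the pointers $Z$ and the FLAGS $Y$, so it leaves the $X$-projection invariant: if $s'=s\oplus a'$ for a RAM action $a'$, then $s'_{|X}=s_{|X}$. Second, by construction of the abstraction, an abstracted action scheme whose pointers currently reference the variables $x_{j_1},\ldots,x_{j_k}$ has exactly the applicability and the effect on $X$ of the original action $a\in A$ with parameters $par(a)=\{x_{j_1},\ldots,x_{j_k}\}$; that is, it is applicable iff $\rho_a$ holds on $s_{|par(a)}$, and its successor agrees with $s\oplus a$ on $X$. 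These two facts reduce the theorem to bookkeeping on the $X$-projection.

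For soundness, I would take any plan $\pi'$ solving $P'_Z$ and walk along its induced trajectory. Deleting every RAM action and replacing every abstracted action by the concrete original action obtained by grounding it with the variables the pointers reference at that step yields an action sequence $\pi$ over $A$. By the first observation the deleted RAM actions do not alter $X$, and by the second the retained actions act on $X$ exactly as their original counterparts, so the $X$-projection of the trajectory of $\pi'$ coincides with the trajectory of $\pi$ from $I$. As $\pi'$ ends in a goal state, its $X$-projection satisfies $G$, hence $\pi$ solves $P$.

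For completeness, I would take any plan $\pi=\tup{a_1,\ldots,a_m}$ solving $P$ and build $\pi'$ by prefixing, before the abstracted copy of each $a_i$, a short block of RAM instructions that positions the pointers onto $par(a_i)$. The main obstacle is arguing this positioning is always possible: I would note that the extension uses at least as many pointers as the largest action arity, that every pointer ranges over the full index set $[0,\ldots,|X|-1]$, and that $\mathsf{inc}/\mathsf{dec}$ together with $\mathsf{set}$ can drive a pointer from any value to any other value in this finite range (the pointers all start at $0$ in $I'_Z$). Hence the required pointer configuration is reachable by RAM actions before each planning step. Invoking the two observations once more, the $X$-projection of $\pi'$ replays the trajectory of $\pi$, so $\pi'$ reaches a goal state and solves $P'_Z$. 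Combining both directions yields the claimed preservation of soundness and completeness.
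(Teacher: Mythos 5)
Your proposal is correct and follows essentially the same route as the paper's own proof: the completeness direction simulates each original action by first repositioning pointers with $\mathsf{inc}$/$\mathsf{dec}$ (and $\mathsf{set}$) actions and then applying the abstracted scheme, while the soundness direction deletes RAM actions and grounds each abstracted action at the variables its pointers reference during execution, using that neither operation changes the effect on $X$ and that the goal is a condition on $X$ alone. Your two explicit structural observations (RAM actions leave the $X$-projection invariant; a correctly positioned scheme replicates its original action's applicability and effect) and the reachability argument for pointer positioning merely make explicit what the paper's proof leaves implicit.
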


\begin{proof}
$\Rightarrow$: Let $\pi=\tup{a_1,\ldots,a_m}$ be a plan that solves $P$. For each action $a_i$ with parameter set $par(a_i)\subseteq X$, $A'$ contains an action scheme $a_i'$ that replaces the parameters in $par(a_i)$ with pointers in $Z$. For each such pointer $z\in Z$, repeatedly apply RAM actions {\tt\small inc($z$)}, or {\tt\small dec($z$)}, until it references the associated state variable in $par(a_i)$, and then apply $a_i'$. The resulting plan $\pi'$ has exactly the same effect as $\pi$ on the original planning state variables in $X$, and since the goal condition of $P'_Z$ is the same as that of $P$, it follows that $\pi'$ solves $P'_Z$.

$\Leftarrow$: Let $\pi'=\tup{a_1',\ldots,a_m'}$ be a plan that solves $P'_Z$. Identify each action in $A'$ among those of $\pi'$, and execute $\pi'$ to identify the assignment of variables to pointers when applying each action in $A'$. Construct a plan $\pi$ corresponding to the subsequence of actions in $A'$ from $\pi'$, replacing each action scheme $a_i'\in A'$ by an original action $a_i\in A$ and choosing as parameters of $a_i$ the state variables referenced by the pointers of $a_i'$ at the moment of execution. Hence $a_i$ has the same effect as $a_i'$ on the state variables in $X$, implying that $\pi$ has the same effect as $\pi'$ on $X$. Since the goal condition of $P$ is the same as that of $P'_Z$, it follows that $\pi$ solves $P$.
\end{proof}

\subsection{Generalized Planning with Pointers}
Our extension of classical planning instances with a RAM machine of $|Z|+ 2$ registers allows us to define the following minimalist feature language.

\begin{definition}[The feature language]
	\label{def:gp-problem}
We define the feature language $\mathcal{L}=\{\neg y_z\wedge \neg y_c, y_z\wedge \neg y_c, \neg y_z\wedge y_c,y_z\wedge y_c\}$ of the four possible joint values for the pair of Boolean variables $Y=\{y_z,y_c\}$.  
\end{definition}

We say that our feature language $\mathcal{L}$ is general, because it is independent of the number (and domain size) of the planning state variables. Different classical planning instances, that are extended with a RAM machine of $|Z|+ 2$ registers, share the features language $\mathcal{L}$ even if they actually have different sets of planning state variables. In this regard, we extend our previous GP problem definition to a set of classical planning instances that, as in ~\cite{hu2011generalized}, they share the same features and actions, but may actually have different state variables. 

\begin{definition}[GP problem with shared features]
	\label{def:gp-problem}
A {\em GP problem with shared features} is a finite and non-empty set of $T$ classical planning instances $\mathcal{P}=\{P_1,\ldots,P_T\}$, where  instances share the same set of actions $A_Z'$, but may differ in the state variables, initial state, and goals. Formally, $P_1=\tup{X'_{1_Z},A_Z',I'_{1_Z},G_1}, \ldots, P_T=\tup{X'_{T_Z},A_Z',I'_{T_Z},G_T}$.
\end{definition}

We leverage our minimalist feature language $\mathcal{L}$ to define a tractable solution space for GP; we represent GP solutions as {\em planning programs} where goto instructions are exclusively conditioned on a feature in $\mathcal{L}$.  Limiting the conditions of goto instructions to any of the four features in $\mathcal{L}$ reduces the number of planning programs with $n$ lines, specially when state variables have large domains; the {\em proposition vector} required to encode a planning program becomes now a vector of only $(n-1)\times 4$ bits, one bit for each of the four features in $\mathcal{L}$. Equation~\ref{eq:size} simplifies then to:
\begin{equation}
	(n-1) \left( |A_Z'| + (n-2) + 4 \right).
	\label{eq:size2}
\end{equation}

As illustrated by Equation~\ref{eq:size2}, the size of this new solution space for GP is independent of the number (and domain size) of the planning state variables. Equation~\ref{eq:nactions}  already showed that $|A_Z'|$ no longer grows with the number/ domain of the planning state variables, and that instead, it grows  with the number of pointers $|Z|$. This means that this GP solution space  scales to GP problems where state variables have large domains (e.g. integers). The solution space is still incomplete in the sense that either the given bound $n$ on the maximum number of program lines, or the maximum number of pointers available $|Z|$, may be too small to accommodate a solution to a given GP problem. 

\subsubsection{Example.} Figure~\ref{fig:lists} shows two example planning programs found by our {\sc BFGP} algorithm for GP: (left) reversing a list; and (right) sorting a list. In both cases the programs generalize, no matter the list length or its content.  In {\em reverse}, line 0 sets the pointer $j$ to the last list element. Then, line 1 swaps the element pointed by $i$ (initially set to zero) and the element pointed by $j$, pointer $j$ is decremented, pointer $i$ is incremented, and this sequence of instructions is repeated until the condition on line 5 becomes false, i.e, when pointers $i$ and $j$ meet, which means that reversing the list is finished. In {\em sorting}, pointers $i$ and $j$ are used for inner (lines 5-7) and outer (lines 8-11)  loops respectively, and $min$ to point to the minimum value in the inner loop (lines 3-4); $\neg y_z\wedge \neg y_c$ on line $2$ represents whether the content of $j$ is less than the content of $min$, while $y_z\wedge\neg y_c$ on line 7 represents whether $j==\mathsf{length}$ (and respectively ~$i==\mathsf{length}$ on line 11).

\begin{figure}
\small
\hspace{-2.35cm}
		\begin{subfigure}[t]{.35\columnwidth}
			\begin{lstlisting}[mathescape]
            0. set(j,tail)
            1. swap(*i,*j)
            2. dec(j)
            3. inc(i)
            4. cmp(j,i)
            5. goto(1,$\neg (\neg y_z\wedge\neg y_c)$)
            6. end			
			\end{lstlisting}
		\end{subfigure}	
\hspace{-1.3cm}		
		\begin{subfigure}[t]{.25\columnwidth}
			\begin{lstlisting}[mathescape]
		 	 0. set(min,i)
			 1. cmp(*j,*min)
			 2. goto(5,$\neg (\neg y_z\wedge \neg y_c)$)
			 3. set(min, j)
			 4. swap(*i,*min)
			 5. inc(j)
			 6. cmp(length,j)
			 7. goto(1,$\neg (y_z\wedge\neg y_c)$)
			 8. inc(i) 
			 9. set(j,i)
			10. cmp(length,i)
			11. goto(0,$\neg (y_z\wedge\neg y_c)$)    
			12. end
			\end{lstlisting}
		\end{subfigure}	
	\caption{{\em Generalized plans}: (left) for reversing a list; (right) for sorting a list with the {\em selection-sort} algorithm.}
	\label{fig:lists}
\end{figure}

\section{Generalized Planning as Heuristic Search}
This section describes our heuristic search approach to GP; a best-first search in our space of candidate planning programs with at most $n$ program lines, and $|Z|$ pointers available. First, we describe the evaluation and heuristic functions and then, we describe the search algorithm.

\subsection{Evaluation and Heuristic Functions}
\label{subsec:error}
We exploit two different sources of information to guide the search in the space of candidate planning programs:
\begin{itemize}
	\item {\em The program structure}. These are evaluation functions, computed in linear time, traversing the bit-vector representation of a planning program $\Pi$. 
		\begin{itemize}
			\item$f_1(\Pi)$, the number of {\em goto} instructions in $\Pi$.
			\item$f_2(\Pi)$, the number of {\em undefined} program lines in $\Pi$.
			\item$f_3(\Pi)$, the number of repeated actions in $\Pi$.
		\end{itemize}
	
	\item {\em The empirical performance of the program}. These functions assess the performance of a planning program $\Pi$ on a GP problem $\mathcal{P}$,  executing $\Pi$ on each of the classical planning instances $P_t\in\mathcal{P}$, {\small $1\leq t \leq T$}:
		\begin{itemize}
			\item$h_4(\Pi,\mathcal{P})=n-PC^{MAX}$, where $PC^{MAX}$ is the maximum program line that is eventually reached after executing $\Pi$ on all the classical planning instances in $\mathcal{P}$. 
			\item $h_5(\Pi,\mathcal{P})=\sum_{P_t\in \mathcal{P}} h_5(\Pi,P_t)$, where \[h_5(\Pi,P_t)=\sum_{x\in X_t} (v_x-G_t(x))^2.\] Here, $v_x\in D_x$ is the value eventually reached, for the state variable $x\in X_t$, after executing $\Pi$ on the classical planning instance $P_t\in\mathcal{P}$, and $G_t(x)$ is the value for this same variable as specified in the goals of $P_t$. Computing $h_5(\Pi,P_t)$ requires then that the goal condition of the classical planning instance $P_t$ is specified as a partial state. Note also that for Boolean variables the squared difference becomes a simple difference.
			\item $f_6(\Pi,\mathcal{P}) = \sum_{P_t\in{\cal P}} |exec(\Pi,P_t)|$, is the cost of a GP solution, as defined in the {\em Background} section of the paper. That is $exec(\Pi,P_t)$ is the sequence of actions induced from executing the planning program $\Pi$ on the planning instance $P_t$. 
		\end{itemize}
\end{itemize}
All these functions are {\em cost functions} (i.e.~smaller values are preferred). Functions $h_4(\Pi,\mathcal{P})$ and $h_5(\Pi,\mathcal{P})$ are cost-to-go {\em heuristics}; they provide an estimate on how far a program is from solving the given GP problem.  Functions $h_4(\Pi,\mathcal{P})$, $h_5(\Pi,\mathcal{P})$, and $f_6(\Pi,\mathcal{P})$ aggregate several costs that could be expressed as a combination of different functions, e.g.~{\em sum}, {\em max}, average, weighted average, etc.  

{\bf Example}. We illustrate how our evaluation/heuristic functions work on the program $\Pi= \;${\tt\small 0. swap(*i, *j)}\linebreak {\tt\small 1. inc(i)} {\tt\small 2. dec(j)} {\tt\small 3\ldots} {\tt\small 4\ldots } {\tt\small 5. end}, where only lines 0-2 are specified because lines 3-4 are not programmed yet. The value of the evaluation functions for this partially specified program is $f_1(\Pi)=0$, $f_2(\Pi)=5-3=2$, $f_3(\Pi)=0$. Given the GP problem $\mathcal{P}=\{P_1,P_2\}$ that comprises the two classical planning instances illustrated in Figure~\ref{fig:cp-example}, and pointers $i$ and $j$ starting at the first and last memory register, respectively, we can compute $h_4$ and $h_5$ to evaluate how far $\Pi$ is from solving the GP problem of sorting lists, and the accumulated cost $f_6$. In this case $h_4(\Pi,\mathcal{P})=5-3=2$, $h_5(\Pi,\mathcal{P})=32$ and  $f_6(\Pi,\mathcal{P})=3+3=6$.

\subsection{Best First Search for Generalized Planning}
Here we provide the implementation details of our {\sc BFGP} algorithm for generalized planning. This algorithm implements a {\em Best-First Search} (BFS) in the space of planning programs with $n$ program lines, and a RAM machine of $|Z|+ 2$ registers. 

The {\sc BFGP} algorithm starts a BFS with an empty planning program, that is represented by the concatenation of the bit-vectors defined in the previous section, with all their bits set to {\tt False}. To generate a tractable set of successor nodes, child nodes in the search tree are restricted to programs that result from programming the bits that correspond to the $PC^{MAX}$ line (i.e. the maximum line reached after executing the current program on the classical planning instances in $\mathcal{P}$). The planning program of a search node is then at Hamming distance 1 from its parent, when programming a planning action, or at Hamming distance 2, when programming a goto instruction. This procedure for successor generation guarantees that duplicate successors are not generated.  

{\sc BFGP} is a {\em frontier search} algorithm, meaning that, to reduce memory requirements, {\sc BFGP} stores only the open list of generated nodes, but not the closed list of expanded nodes~\cite{korf2005frontier}. BFS sequentially expands the best node in a priority queue (aka {\em open list}) sorted by an evaluation/heuristic function. When evaluating the {\em empirical performance} of a program, e.g. computing heuristics $h_i(\Pi,\mathcal{P})$, if the execution of $\Pi$ on a given instance $P_t\in \mathcal{P}$ fails, this means that the search node corresponding to the planning program $\Pi$ is a dead-end, and hence it is not added to the open list. If the planning program $\Pi$ solves all the instances $P_t\in \mathcal{P}$, then search ends, and $\Pi$ is a valid solution for the GP problem $\mathcal{P}$. {\sc BFGP} can compute optimal solutions when run in {\em anytime mode}. In this case we can use $f_6(\Pi,\mathcal{P})$ to rank GP solutions (e.g. to prefer a sorting program with smaller computational complexity).

\begin{table*}[]
	\centering
	\begin{small}    
		\begin{tabular}{|l|c||c|c|c|c||c|c|c|c||c|c|c|c|} \hline
			\multirow{2}{*}{\textbf{Domain}} & \multirow{2}{*}{$n,|Z|$} & \multicolumn{4}{|c||}{$f_1$} & \multicolumn{4}{|c||}{$f_2$} & \multicolumn{4}{|c|}{$f_3$}  \\\cline{3-14}
			& & Time & Mem. & Exp. & Eval.  & Time & Mem. & Exp. & Eval.  & Time & Mem. & Exp. & Eval.  \\\hline
			T. Sum & 5, 2 & 0.24 & 4.2 & 4.8K & 5.8K & 0.30 & {\bf 3.8} & 6.4K & 6.4K & 0.13 & 3.9 & 2.2K & 2.8K \\
			Corridor & 7, 2 & 3.04 & 6.6 & 12.4K & 26.7K & {\bf 0.41} & {\bf 3.8} & {\bf 2.2K} & {\bf 2.2K} & 3.66 & 4.2 & 24.2K & 25.9K \\
			Reverse & 7, 3 & 82 & 61 & 0.28M & 0.57M & 181 & {\bf 4.0} & 0.95M & 0.95M & 170 & 23 & 0.75M & 0.84M \\
			Select & 7, 3 & 198 & 110 & 0.83M & 1.10M & 27 & {\bf 3.9} & 0.12M & 0.12M & 76.49 & 11 & 0.34M & 0.38M \\
			Find & 7, 3 & 195 & 175 & 0.50M & 1.36M & 271 & {\bf 4.0} & 1.46M & 1.46M & {\bf 86} & 12 & {\bf 0.41M} & {\bf 0.45M} \\
			Fibonacci & 8, 3 & {\bf 496} & 922 & {\bf 2.48M} & {\bf 6.79M} & 1,082 & {\bf 3.9} & 11.8M & 11.8M & TO & - & - & - \\
			Gripper & 8, 4 & TO & - & - & - & 3,439 & {\bf 4.1} & 19.9M & 19.9M  & TO & - & - & - \\
			Sorting & 9, 3 & TO & - & - & - & TO & - & - & -  & {\bf 3,143} & {\bf 711} & {\bf 19.5M} & {\bf 22.9M} \\\hline
		    \multicolumn{2}{|c||}{Average} & 162 & 213 & 0.68M & 1.64M & 714 & 3.9 & 4.89M & 4.89M & 580 & 128 & 3.51M & 4.09M \\\hline
		\end{tabular}
		\begin{tabular}{|l|c||c|c|c|c||c|c|c|c||c|c|c|c|} \hline
			\multirow{2}{*}{\textbf{Domain}} & \multirow{2}{*}{$n,|Z|$} &  \multicolumn{4}{|c||}{$h_4$} & \multicolumn{4}{|c||}{$h_5$} & \multicolumn{4}{|c|}{$f_6$} \\\cline{3-14}
			& & Time & Mem. & Exp. & Eval.  & Time & Mem. & Exp. & Eval.  & Time & Mem. & Exp. & Eval.  \\\hline
			T. Sum & 5, 2 & 0.10 & {\bf 3.8} & 1.6K & 1.6K & {\bf 0.09} & 3.9 & {\bf 1.2K} & {\bf 1.4K} & 0.45 & 4.8 & 7.4K & 7.4K \\
			Corridor & 7, 2 & 1.09 & 3.9 & 5.3K & 5.4K & 5.29 & 4.1 & 30.3K  & 31.3K & 6.16 & 7.6 & 35.3K & 35.3K \\
			Reverse & 7, 3 & 205 & 4.2 & 0.88M & 0.88M & {\bf 1.46} & 4.2 & {\bf 4.9K} & {\bf 6.3K} & 369 & 230 & 1.57M & 1.65M \\
			Select & 7, 3 & {\bf 0.80} & {\bf 3.9} & {\bf 3.0K} & {\bf 4.2K} & 94 & 5.7 & 0.34M & 0.35M & 255 & 155 & 1.06M & 1.14M \\
			Find & 7, 3 & 415 & 4.4 & 1.76M & 1.76M & 140 & 7.0 & 0.58M & 0.59M & 423 & 244 & 1.76M & 1.77M \\
			Fibonacci & 8, 3 & TO & - & - & - & 1,500 & 120 & 11.3M & 11.8M & TO & - & - & - \\
			Gripper & 8, 4  & TO & - & - & - & {\bf 83} & 5.5 & {\bf 0.34M} & {\bf 0.35M} & TO & - & - & - \\
			Sorting & 9, 3 & TO & - & - & - & TO & - & - & -  & TO & - & - & - \\\hline
		    \multicolumn{2}{|c||}{Average} & 125 & 4.0 & 0.53M & 0.53M & 260 & 21 & 1.80M & 1.88M & 211 & 128 & 0.89M & 0.92M \\\hline
		\end{tabular}
	\end{small}
	\caption{ We report the number of program lines $n$, and pointers $|Z|$ per domain, and for each evaluation/heuristic function, CPU (secs), memory peak (MBs), and the numbers of expanded and evaluated nodes. TO stands for Time-Out ($>$1h of CPU).}
	\label{tab:heuristics}
\end{table*}

\section{Evaluation}
This section evaluates the empirical performance of our {\em GP as heuristic search} approach. All experiments are performed in an Ubuntu 20.04 LTS, with AMD® Ryzen 7 3700x 8-core processor $\times$ 16 and 32GB of RAM. We make fully available the compilation source code, evaluation scripts, and used
benchmarks \cite{segovia_aguas_javier_2021_4600833},
so any experimental data reported in the paper is fully reproducible.

\subsection{Benchmarks} 
We report results in eight domains of two kinds ({\em numerical series} and {\em memory manipulation}). Numerical series compute the $n^{th}$ term of a math sequence. {\em Memory manipulation} domains include {\em Reverse} for reversing the content of a list, {\em Select} for choosing the minimum value in the list, {\em Find} for counting the number of occurrences of a specific value in a list, {\em Corridor} to move to any arbitrary location in a corridor starting from another arbitrary location, and {\em Gripper} to move all balls from one room to another. 
In the numerical series domains, the planning instances of a GP problem represent the computation of the first ten terms of the series. In the memory manipulation domains, the planning instances have initial random content, with list sizes ranging from two to twenty-one (besides {\em gripper} where all balls are initially in room A). The results of arithmetical operations is bounded to $10^2$ in the synthesis of GP solutions, and to $10^9$ in the validation of GP solutions.

All domains include the base RAM instructions $\{{\tt\small inc}(z_1)$, ${\tt\small dec}(z_1)$, ${\tt\small cmp}(z_1,z_2)$, ${\tt\small cmp}(*z_1,*z_2)$, ${\tt\small set}(z_1,z_2)$ $| \; z_1,z_2 \in Z\}$. In addition, each domain also contains regular planning action schemes, that do not affect the FLAGS.
\begin{itemize}
    \item {\em Triangular Sum} and {\em Fibonacci} include the action scheme $add(*z_1,*z_2)$ for adding the referenced value of a pointer to the content of the other pointer.
    \item {\em Select} and {\em Find} only require the base RAM instructions. {\em Reverse} and {\em Sorting} include also the $swap(*z_1,*z_2)$ action scheme. {\em Corridor} includes $left(*z_1)$ and $right(*z_1)$ schemes for moving left and right in the corridor, respectively, and {\em Gripper} includes the action schemes $pick(*z_1)$, $drop(*z_1)$, $moveAB()$, and $moveBA()$. 
\end{itemize}

\subsection{Synthesis of GP Solutions}
Table~\ref{tab:heuristics} summarizes the results of {\sc BFGP} with our six different evaluation/heuristic functions (best results in bold): i) $f_2$ and $h_5$ exhibited the best coverage; ii) there is no clear dominance of a {\em structure} evaluation function, $f_2$ has the best memory consumption while $f_3$ is the only structural function that solves {\em Sorting}; iii) the {\em performance}-based function $h_5$ dominates $h_4$ and $f_6$. 

Interestingly, the base performance of {\sc BFGP} with a single evaluation/heuristic function is improved combining both structural and cost-to-go information; we can guide the search of {\sc BFGP} with a cost-to-go heuristic function and break ties with a structural evaluation function, and vice versa. Table~\ref{tab:h-combined} shows the performance of $BFGP(f_1,h_5)$ and its reversed configuration $BFGP(h_5,f_1)$ which actually resulted in the overall best configuration solving all domains. 

\begin{table}[]
\begin{small} 
    \centering
    \begin{tabular}{|l||c|c|c|c|} \hline
        \multirow{2}{*}{\bf Dom.} & \multicolumn{4}{|c|}{BFGP$(f_1,h_5)$ / BFGP$(h_5,f_1)$}\\\cline{2-5} 
          & T.  & M. & Exp. & Eval.  \\\hline
            T. Sum & 0.2/{\bf 0.1} & 4.3/{\bf 3.8} & 2.8K/{\bf 1.1K} & 4.8K/{\bf 1.4K}\\
			Corr. & {\bf 3.2}/4.5 & 6.6/{\bf 5.9} & {\bf 6.5K}/26.0K & {\bf 21.6K}/27.5K \\
			Rev.  & 63/{\bf 1.4} & 52/{\bf 4.7} & 81.9K/{\bf 3.7K} & 0.3M/{\bf 7.7K}\\
			Sel.  & 203/{\bf 80}  & 110/{\bf 7.0} & 0.6M/{\bf 0.3M} & 0.9M/{\bf 0.3M} \\
			Find  & 313/{\bf 162} & 176/{\bf 14} & 0.9M/{\bf 0.7M} & 1.5M/{\bf 0.7M} \\
			Fibo.  & 528/{\bf 22} & 828/{\bf 32} & 1.4M/{\bf 75K} & 5.3M/{\bf 0.2M} \\
			Grip.  & TO/{\bf 6.9} & -/{\bf 10} & -/{\bf 5.8K} & -/{\bf 37.3K}   \\
			Sort.  & TO/{\bf 713} & -/{\bf 730} & -/{\bf 4.4M} & -/{\bf 4.5M}  \\\hline
    \end{tabular} 
    \caption{For each domain we report, CPU time (secs), memory peak (MBs), num. of expanded and evaluated nodes. TO means time-out ($>$ 1h of CPU). Best results in bold.}
    \label{tab:h-combined}
    \end{small}
\end{table}

Figure~\ref{fig:synthesis} shows the programs computed by  $BFGP(h_5,f_1)$. The state variables in {\em Triangular Sum} are two pointers ($a$ and $b$) and two memory registers (v[0],v[1]). The program for {\em Triangular Sum} starts with $a=0$, $b=0$, $v[0]=0$ and $v[1]=n$; first it increases $b$ by one, then iterates adding the content of $v[1]$ to $v[0]$, and decreasing $v[1]$ until it becomes $0$. The {\em Corridor} domain has a pointer $i$, which points to a register with the current location, and a constant pointer $g_i$ that accesses a memory register with the goal location. The solution moves the agent right in the corridor until the goal location is surpassed, then it moves the agent left until the goal is reached.

The {\em Reverse}, {\em Select} and {\em Find} domains have three pointers each, and one of these pointers is a constant pointer indicating the vector size. The {\em Reverse} solution is the same illustrated in Figure~\ref{fig:lists}, that was already explained above. {\em Select} uses pointer $a$ to iterate over the vector,  compares each position with the content of $b$, and assigns $a$ to $b$ if the content of $a$ is smaller than the content of $b$. The {\em Find} program requires a counter to accumulate the number of occurrences of the $\mathsf{target}$ element to be found; the {\em Find} program compares the content of $a$ with a $\mathsf{target}$ content, and if they are equal, the $\mathsf{counter}$ is increased by one.

In {\em Fibonacci} there is a vector that stores the first $n$ numbers of the Fibonacci sequence. The constant pointer $n$ indicates the vector size, and pointers $b$ and $c$ are used to iterate over the vector (pointer $c$ acts as $F_n$, while $b$ plays the role of $F_{n-1}$ and $F_{n-2}$). The solution program for {\em Gripper} only uses the left arm of the robot to pick a ball, then moves the robot to the next room, drops the ball, and moves the robot back to the initial room, repeating the entire process until the last object is processed. {\em Sorting} is solved by $BFGP(h_5,f_1)$ with a succinct but more complex implementation of the selection-sort algorithm, where the first register is used as the selected element, and that requires many back and forth iterations until pointer $i$ reaches the first element.

\begin{figure*}
\hspace{-3cm}
	\begin{scriptsize}
		\begin{subfigure}[t]{0.20\textwidth}
			\begin{lstlisting}[mathescape]
			0. inc(b)
			1. add(*a,*b)
			2. dec(*b)
			3. goto(1,$\neg$($y_z\wedge\neg y_c$))
			4. end
			
			
			
			\end{lstlisting}
		\end{subfigure}	
		\hspace{.5cm}
		\begin{subfigure}[t]{0.20\textwidth}
			\begin{lstlisting}[mathescape]
			0. right(*i)
			1. cmp(*i,*gi)
			2. goto(0,$\neg$($\neg y_z\wedge y_c$))
			3. left(*i)
			4. cmp(*i,*gi)
			5. goto(1,$\neg$($ y_z\wedge \neg y_c$))
			6. end
			
			\end{lstlisting}
		\end{subfigure}
		\hspace{.5cm}		
		\begin{subfigure}[t]{0.20\textwidth}
			\begin{lstlisting}[mathescape]
			0. set(j,tail)
			1. swap(*i,*j)
			2. dec(j)
			3. inc(i)
			4. cmp(j,i)
			5. goto(1,$\neg (\neg y_z \wedge \neg y_c)$)
			6. end
			
			\end{lstlisting}
		\end{subfigure}	
		\hspace{.5cm}
		\begin{subfigure}[t]{0.20\textwidth}
			\begin{lstlisting}[mathescape]
			0. inc(a)
			1. cmp(*b,*a)
			2. goto(4,$\neg (\neg y_z \wedge y_c)$)
			3. set(b,a)
			4. cmp(tail,a)
			5. goto(0,$\neg (y_z \wedge \neg y_c)$)
			6. end
			
			\end{lstlisting}
		\end{subfigure}
		\newline	
		\begin{subfigure}[t]{0.20\textwidth}
			\begin{lstlisting}[mathescape]
     0. cmp(*target,*a)
     1. goto(3,$\neg$($y_z\wedge\neg y_c$))
     2. inc(accumulator)
     3. inc(a)
     4. cmp(tail,a)
     5. goto(0,$\neg$($y_z\wedge\neg y_c$))
     6. end


			\end{lstlisting}
		\end{subfigure}	
		\hspace{.5cm}
		\begin{subfigure}[t]{0.20\textwidth}
			\begin{lstlisting}[mathescape]
     0. inc(c)
     1. inc(c)
     2. add(*c,*b)
     3. inc(b)
     4. add(*c,*b)
     5. cmp(n,c)
     6. goto(0,$\neg$($y_z\wedge\neg y_c$))
     7. end
     
			\end{lstlisting}
		\end{subfigure}	
		\hspace{.5cm}
		\begin{subfigure}[t]{0.20\textwidth}
			\begin{lstlisting}[mathescape]
     0. pick(*left$\text{-}$arm)
     1. moveAB()
     2. drop(*left$\text{-}$arm)
     3. inc(left$\text{-}$arm)
     4. moveBA()
     5. cmp(left$\text{-}$arm,last$\text{-}$obj)
     6. goto(0,$\neg$($y_z\wedge\neg y_c$))
     7. end

			\end{lstlisting}
		\end{subfigure}
		\hspace{.5cm}
		\begin{subfigure}[t]{0.20\textwidth}
			\begin{lstlisting}[mathescape]
     0. set(i,tail)
     1. swap(*i,*j)
     2. cmp(*j,*i)
     3. goto(6,$\neg$($\neg y_z\wedge y_c$))
     4. inc(i)
     5. goto(0,$\neg$($y_z\wedge y_c$))
     6. dec(i)
     7. goto(1,$\neg$($y_z\wedge\neg y_c$))
     8. end
			\end{lstlisting}
		\end{subfigure}	
	\end{scriptsize}
	\caption{Solutions computed by $BFGP(h_5,f_1)$ for Tr. Sum, Corridor, Reverse, Select, Find, Fibonacci, Gripper and Sorting.}
	\label{fig:synthesis}
\end{figure*}

\subsection{Validation of the GP Solutions}
Table~\ref{tab:validation} reports the CPU time, and peak memory, yield when running the solutions synthesized by $BFGP(h_5,f_1)$ on a validation set. All the solutions synthesized by $BFGP(h_5,f_1)$ were successfully validated. In the validation set the domain of the planning state variables equals the integer bounds ($10^9$). The solution for {\em Triangular Sum} is validated over $44{,}709$ instances, the last one corresponding to the $44{,}720^{th}$ term in the sequence, i.e.~the integer $999{,}961{,}560$. Fibonacci has a validation set of 33 instances, ranging from the $11^{th}$ Fibonacci term to the $43^{rd}$, i.e.~the integer $701{,}408{,}733$. The solutions for {\em Reverse}, {\em Select}, and {\em Find} domains, are validated on $50$ instances each, with vector sizes ranging from $1K$ to $50K$, and random integer elements bounded by $10^9$. {\em Corridor} validation instances range from corridor lengths of $12$ to $1{,}011$, and {\em Gripper} is validated on $1{,}000$ instances with up to $1{,}011$ balls. Validation instances in {\em Sorting} are vectors with sizes ranging from $12$ to $31$, and with random integer elements bounded by $10^9$.  The largest CPU-times and memory peaks correspond to the configuration that implements the detection of {\em infinite programs}, which requires saving states to detect whether they are revisited during execution. Skipping this mechanism allows to validate non-infinite programs faster~\cite{aguas2020generalized}.

\begin{table}
	\centering
	\begin{small}
		\begin{tabular}{|l|r||r|r||r|r|}\hline
			\textbf{Dom.}  & Inst. & Time$_\infty$ & Mem$_\infty$ & Time & Mem \\\hline
			T. Sum & 44,709 & 1,066.74 & 53MB & \textbf{574.08} & \textbf{47MB} \\  
			Corr. & 1,000 & 0.23 & 5.0MB & \textbf{0.15} & {\bf 4.7MB} \\
			Rev. & 50 & 37.96 & 5.2GB & \textbf{2.70} & \textbf{0.3GB} \\
			Sel. & 50 & 144.75 & 19.6GB & \textbf{2.29} & \textbf{33MB} \\
			Find & 50 & 114.55 & 19.6GB & \textbf{2.12} & \textbf{33MB}\\
			Fibo. & 33 & \textbf{0.00} & 4.2MB & \textbf{0.00} & \textbf{3.9MB} \\
			Grip. & 1,000 & 2.71 & 0.1GB & \textbf{1.65} & \textbf{0.1GB} \\
			Sort. & 20 & 272.06 & 15.2GB & \textbf{52.04} & \textbf{3.8MB} \\\hline
		\end{tabular}
	\end{small}
	\caption{Validation set, CPU-time (secs) and memory peak for program validation, with/out {\em infinite program} detection.}
	\label{tab:validation}
\end{table}

\subsection{Comparison with GP Compilation}
Table~\ref{tab:comparisson} compares the performance of $BFGP (h_5,f_1)$, in terms of CPU-time, with PP, which stands for the compilation-based approach for GP~\cite{segovia2019computing}, that computes planning programs with the {\sc LAMA-2011} setting (first solution) to solve the classical planning problems that result from the compilation. The {\em Gripper} domain is the only one where it is easier to compute a generalized plan following the classical planning compilation. In addition, the compilation-based approach reported no solution for the {\em Corridor} and {\em Sorting} domains. What is more,  the compilation approach (PP) is unable to perform the previous validation; off-the-shelf planner cannot handle state variables with the reported domain sizes. 

\begin{table}[]
    \centering
    \small
    \begin{tabular}{|l|c|c|} \hline
        {\bf Domain}& PP in sec. & BFGP$(h_5,f_1)$ in sec. \\ \hline 
        Triangular Sum & 0.85 & \textbf{0.1} \\
        Corridor  & - & \textbf{4.5} \\
        Reverse  & 87.86 & \textbf{1.4} \\
        Select  & 204.20 & \textbf{80} \\
        Find  & 274.86 & \textbf{162} \\
        Fibonacci  & 3,570 & \textbf{22} \\
        Gripper  & {\bf 1} & 6.9 \\ 
        Sorting & - & {\bf 713} \\\hline
    \end{tabular}
    \caption{Computing CPU-time (secs) for solving domains in the GP compilation approach (PP) and $BFGP (h_5,f_1)$. }
    \label{tab:comparisson}
\end{table}

\section{Related Work}
This work is the first full native heuristic search approach to GP, with a search space, evaluation/heuristic functions, and search algorithm, that are specially targeted to GP. Our {\em GP as heuristic search} approach is related to the classical planning compilation for GP~\cite{segovia2019computing}. The paper showed that this compilation presented an important drawback; the size of the classical planning instance output by the compilation grows exponentially with the number and domain size of the state variables. In practice, this drawback limits the applicability of the cited compilation to planning instances of small size.

A related trend in GP represents generalized plans as lifted policies that are computed via a FOND planner~\cite{bonet2019learning,illanes2019generalized}. Instantiating a lifted policy for solving a specific classical planning instance requires a matching process over the objects in the instance. This instantiation requirement makes it difficult to define native heuristic functions for searching in the space of possible GP solutions with a classic heuristic search algorithm. On the other hand, our approach naturally suits a classic best-first search framework; the execution of a planning program on a classical planning instance is a matching-free process that enables the definition of effective evaluation/heuristic functions.

Our feature language is related to the ones used in {\em Qualitative numeric planning}~\cite{srivastava2011qualitative,bonet2019qualitative,illanes2019generalized}. {\em Qualitative numeric planning} leverages propositional variables to abstract the value of numeric state variables. In our work the value of FLAGS $Y=\{y_z,y_c\}$ depends on the last executed action. Considering that only RAM instructions update the variables in $Y$, we have an observation space of $2^{|Y|}\times 2|Z|^2$ state observations implemented with only $|Y|$ Boolean variables. The four joint values of $\{y_z,y_c\}$ can model a large space of observations, e.g.~$=\,$0, $\neq\,$0, $<0, >0, \leq 0, \geq 0$ as well as relations $=, \neq, <, >, \leq, \geq$ on variable pairs. Our state observations may also refer to pointer contents; in the sorting program of Figure~\ref{fig:lists}, {\tt\small cmp(*j,*min)} is conditioned on the value of the variables referenced by two pointers, while {\tt\small cmp(length,j)} is conditioned on the pointers themselves.

Last but not least, our GP solution representation can be understood as a program encoded in a minimalist programming language (i.e. with a lean instruction set). Furthermore, many GP benchmarks have solutions that are known to be polynomial algorithms. Previous work on generalized planning~\cite{segovia2019computing,illanes2019generalized,bonet2020high} was already aware of the connection between GP and {\em program synthesis}~\cite{gulwani2017program,lee2018accelerating,alur2018search}. We believe this work builds a stronger connection between these two closely related areas.

\section{Conclusion}
We presented the first native heuristic search approach for GP. Since we are approaching GP as a classic tree search, a wide landscape of effective techniques, coming from heuristic search and classical planning, may improve the base performance of our approach. We mention some of the more promising ones. Large open lists can be handled more effectively  splitting them in several smaller lists~\cite{Helmert:FD:JAIR06}. {\em Delayed duplicate detection} could be implemented to manage large closed lists with magnetic disk memory~\cite{korf2008linear}. Further, once a search node is cancelled (e.g.~because $h_i(\Pi,\mathcal{P})$ identified that the {\em planning program} fails on a given instance), any program equivalent to this node should also be cancelled, e.g.~any program that can be built with transpositions of the causally-independent instructions. 

{\sc BFGP} starts from the empty program but nothing prevents us from starting search from a partially specified planning program. In fact, local search approaches have already shown successful for planning~\cite{gerevini2003planning} and program synthesis~\cite{solar2009sketching,gulwani2017program}. SATPLAN planners exploit multiple-thread computing to parallelize search in solution spaces with different bounds~\cite{rintanen2012planning}. This same idea could be applied to multiple searches for GP solutions with different program sizes.  Last but not least, our goal-driven heuristic $h_5(\Pi,P_t)$ builds on top of the {\em Euclidean distance}; better estimates may be obtained by building on top of better informed planning heuristics~\cite{hoffmann2003metric,Helmert:FD:JAIR06,frances2017effective}. 

\begin{small}
\section*{Acknowledgments}
Javier Segovia-Aguas is supported by TAILOR, a project funded by EU H2020 research and innovation programme no. 952215, an ERC Advanced Grant no. 885107, and grant TIN-2015-67959-P from MINECO, Spain. Sergio Jim\'enez is supported by the {\it Ramon y Cajal} program, RYC-2015-18009, the Spanish MINECO project TIN2017-88476-C2-1-R. Anders Jonsson is partially supported by Spanish grants PID2019-108141GB-I00 and PCIN-2017-082.
\end{small}

\bibliography{generalized-aaai21}

\begin{thebibliography}{41}
\providecommand{\natexlab}[1]{#1}
\providecommand{\url}[1]{\texttt{#1}}
\providecommand{\urlprefix}{URL }
\expandafter\ifx\csname urlstyle\endcsname\relax
  \providecommand{\doi}[1]{doi:\discretionary{}{}{}#1}\else
  \providecommand{\doi}{doi:\discretionary{}{}{}\begingroup
  \urlstyle{rm}\Url}\fi

\bibitem[{Alur et~al.(2018)Alur, Singh, Fisman, and
  Solar-Lezama}]{alur2018search}
Alur, R.; Singh, R.; Fisman, D.; and Solar-Lezama, A. 2018.
\newblock Search-based program synthesis.
\newblock \emph{Communications of the ACM} 61(12): 84--93.

\bibitem[{Bonet et~al.(2020)Bonet, De~Giacomo, Geffner, Patrizi, and
  Rubin}]{bonet2020high}
Bonet, B.; De~Giacomo, G.; Geffner, H.; Patrizi, F.; and Rubin, S. 2020.
\newblock High-Level Programming via Generalized Planning and LTL Synthesis.
\newblock In \emph{KR}, volume~17, 152--161.

\bibitem[{Bonet, Frances, and Geffner(2019)}]{bonet2019learning}
Bonet, B.; Frances, G.; and Geffner, H. 2019.
\newblock Learning features and abstract actions for computing generalized
  plans.
\newblock In \emph{AAAI}, volume~33, 2703--2710.

\bibitem[{Bonet and Geffner(2001)}]{bonet2001planning}
Bonet, B.; and Geffner, H. 2001.
\newblock Planning as heuristic search.
\newblock \emph{Artificial Intelligence} 129: 5--33.

\bibitem[{Bonet and Geffner(2020)}]{bonet2019qualitative}
Bonet, B.; and Geffner, H. 2020.
\newblock Qualitative Numeric Planning: Reductions and Complexity.
\newblock \emph{JAIR} 69: 923--961.

\bibitem[{Bonet, Palacios, and Geffner(2010)}]{Geffner:FSM:AAAI10}
Bonet, B.; Palacios, H.; and Geffner, H. 2010.
\newblock Automatic Derivation of Finite-State Machines for Behavior Control.
\newblock In \emph{AAAI}, volume~24.

\bibitem[{Boolos, Burgess, and Jeffrey(2002)}]{boolos2002computability}
Boolos, G.~S.; Burgess, J.~P.; and Jeffrey, R.~C. 2002.
\newblock \emph{Computability and logic}.
\newblock Cambridge university press.

\bibitem[{Dandamudi(2005)}]{dandamudi2005installing}
Dandamudi, S.~P. 2005.
\newblock Installing and using nasm.
\newblock \emph{Guide to Assembly Language Programming in Linux} .

\bibitem[{Franc{\`e}s et~al.(2017)}]{frances2017effective}
Franc{\`e}s, G.; et~al. 2017.
\newblock \emph{Effective planning with expressive languages}.
\newblock Ph.D. thesis, Universitat Pompeu Fabra.

\bibitem[{Franc{\`e}s~Medina et~al.(2019)Franc{\`e}s~Medina, Corr{\^e}a,
  Geissmann, and Pommerening}]{frances2019generalized}
Franc{\`e}s~Medina, G.; Corr{\^e}a, A.~B.; Geissmann, C.; and Pommerening, F.
  2019.
\newblock Generalized potential heuristics for classical planning.
\newblock In \emph{IJCAI}.

\bibitem[{Gerevini, Saetti, and Serina(2003)}]{gerevini2003planning}
Gerevini, A.; Saetti, A.; and Serina, I. 2003.
\newblock Planning through stochastic local search and temporal action graphs
  in LPG.
\newblock \emph{JAIR} 20: 239--290.

\bibitem[{Gulwani et~al.(2017)Gulwani, Polozov, Singh
  et~al.}]{gulwani2017program}
Gulwani, S.; Polozov, O.; Singh, R.; et~al. 2017.
\newblock Program synthesis.
\newblock \emph{Foundations and Trends in Programming Languages} .

\bibitem[{Helmert(2006)}]{Helmert:FD:JAIR06}
Helmert, M. 2006.
\newblock {The Fast Downward Planning System}.
\newblock \emph{Journal of Artificial Intelligence Research} 26: 191--246.

\bibitem[{Hoffmann(2001)}]{hoffmann2001ff}
Hoffmann, J. 2001.
\newblock FF: The fast-forward planning system.
\newblock \emph{AI magazine} 22(3): 57--57.

\bibitem[{Hoffmann(2003)}]{hoffmann2003metric}
Hoffmann, J. 2003.
\newblock The Metric-FF Planning System: Translating "Ignoring Delete Lists" to
  Numeric State Variables.
\newblock \emph{JAIR} 20: 291--341.

\bibitem[{Hu and De~Giacomo(2011)}]{hu2011generalized}
Hu, Y.; and De~Giacomo, G. 2011.
\newblock Generalized planning: Synthesizing plans that work for multiple
  environments.
\newblock In \emph{IJCAI}, volume~22, 918--923.

\bibitem[{Hu and {De Giacomo}(2013)}]{Giacomo:FSM:ICAPS13}
Hu, Y.; and {De Giacomo}, G. 2013.
\newblock A Generic Technique for Synthesizing Bounded Finite-State
  Controllers.
\newblock In \emph{ICAPS}.

\bibitem[{Hu and Levesque(2011)}]{Levesque:GPlanning:IJCAI11}
Hu, Y.; and Levesque, H.~J. 2011.
\newblock A Correctness Result for Reasoning about One-Dimensional Planning
  Problems.
\newblock In \emph{IJCAI}, 2638--2643.

\bibitem[{Illanes and McIlraith(2019)}]{illanes2019generalized}
Illanes, L.; and McIlraith, S.~A. 2019.
\newblock Generalized planning via abstraction: arbitrary numbers of objects.
\newblock In \emph{AAAI}, volume~33, 7610--7618.

\bibitem[{Jim{\'e}nez, Segovia-Aguas, and Jonsson(2019)}]{jimenez2019review}
Jim{\'e}nez, S.; Segovia-Aguas, J.; and Jonsson, A. 2019.
\newblock A review of generalized planning.
\newblock \emph{KER} 34: 1--28.

\bibitem[{Korf(2008)}]{korf2008linear}
Korf, R.~E. 2008.
\newblock Linear-time disk-based implicit graph search.
\newblock \emph{Journal of the ACM} 55(6): 1--40.

\bibitem[{Korf et~al.(2005)Korf, Zhang, Thayer, and Hohwald}]{korf2005frontier}
Korf, R.~E.; Zhang, W.; Thayer, I.; and Hohwald, H. 2005.
\newblock Frontier search.
\newblock \emph{Journal of the ACM} 52(5): 715--748.

\bibitem[{Lee et~al.(2018)Lee, Heo, Alur, and Naik}]{lee2018accelerating}
Lee, W.; Heo, K.; Alur, R.; and Naik, M. 2018.
\newblock Accelerating search-based program synthesis using learned
  probabilistic models.
\newblock \emph{ACM SIGPLAN Notices} .

\bibitem[{Lipovetzky and Geffner(2017)}]{lipovetzky2017best}
Lipovetzky, N.; and Geffner, H. 2017.
\newblock Best-first width search: Exploration and exploitation in classical
  planning.
\newblock In \emph{AAAI}, volume~31.

\bibitem[{Lotinac et~al.(2016)Lotinac, Segovia-Aguas, Jim{\'e}nez, and
  Jonsson}]{lotinac2016automatic}
Lotinac, D.; Segovia-Aguas, J.; Jim{\'e}nez, S.; and Jonsson, A. 2016.
\newblock Automatic generation of high-level state features for generalized
  planning.
\newblock In \emph{IJCAI}, 3199--3205.

\bibitem[{Mart{\'\i}n and Geffner(2004)}]{Geffner:Gpolicies:AppliedI04}
Mart{\'\i}n, M.; and Geffner, H. 2004.
\newblock Learning generalized policies from planning examples using concept
  languages.
\newblock \emph{Applied Intelligence} 20: 9--19.

\bibitem[{Minsky(1961)}]{minsky1961recursive}
Minsky, M.~L. 1961.
\newblock Recursive unsolvability of Post's problem of ``tag'' and other topics
  in theory of Turing machines.
\newblock \emph{Annals of Mathematics} .

\bibitem[{Nau et~al.(2003)Nau, Au, Ilghami, Kuter, Murdock, Wu, and
  Yaman}]{nau:shop2:JAIR03}
Nau, D.~S.; Au, T.-C.; Ilghami, O.; Kuter, U.; Murdock, J.~W.; Wu, D.; and
  Yaman, F. 2003.
\newblock SHOP2: An HTN planning system.
\newblock \emph{JAIR} 20: 379--404.

\bibitem[{Ramirez and Geffner(2016)}]{ramirez2016heuristics}
Ramirez, M.; and Geffner, H. 2016.
\newblock Heuristics for Planning, Plan Recognition and Parsing.
\newblock \emph{arXiv:1605.05807} .

\bibitem[{Richter and Westphal(2010)}]{richter2010lama}
Richter, S.; and Westphal, M. 2010.
\newblock The LAMA planner: Guiding cost-based anytime planning with landmarks.
\newblock \emph{JAIR} 39: 127--177.

\bibitem[{Rintanen(2012)}]{rintanen2012planning}
Rintanen, J. 2012.
\newblock Planning as satisfiability: Heuristics.
\newblock \emph{Artificial intelligence} 193: 45--86.

\bibitem[{Segovia-Aguas, Jim{\'e}nez, and
  Jonsson(2016)}]{javi-Gplanning-IJCAI16}
Segovia-Aguas, J.; Jim{\'e}nez, S.; and Jonsson, A. 2016.
\newblock Hierarchical Finite State Controllers for Generalized Planning.
\newblock In \emph{IJCAI}, 3235--3241.

\bibitem[{Segovia-Aguas, Jim{\'e}nez, and
  Jonsson(2017)}]{segovia2017generating}
Segovia-Aguas, J.; Jim{\'e}nez, S.; and Jonsson, A. 2017.
\newblock Generating context-free grammars using classical planning.
\newblock In \emph{IJCAI}.

\bibitem[{Segovia-Aguas, Jim{\'e}nez, and Jonsson(2019)}]{segovia2019computing}
Segovia-Aguas, J.; Jim{\'e}nez, S.; and Jonsson, A. 2019.
\newblock Computing programs for generalized planning using a classical
  planner.
\newblock \emph{Artificial Intelligence} 272: 52--85.

\bibitem[{Segovia-Aguas, Jim{\'e}nez, and Jonsson(2020)}]{aguas2020generalized}
Segovia-Aguas, J.; Jim{\'e}nez, S.; and Jonsson, A. 2020.
\newblock Generalized Planning with Positive and Negative Examples.
\newblock In \emph{AAAI}, volume~34, 9949--9956.

\bibitem[{Segovia-Aguas, Jiménez, and
  Jonsson(2021)}]{segovia_aguas_javier_2021_4600833}
Segovia-Aguas, J.; Jiménez, S.; and Jonsson, A. 2021.
\newblock Best First Generalized Planning.
\newblock \doi{10.5281/zenodo.4600833}.
\newblock \url{https://doi.org/10.5281/zenodo.4600833}.

\bibitem[{Siddharth et~al.(2011)Siddharth, Neil, Shlomo, and
  Tianjiao}]{Zilberstein:Gplanning:icaps11}
Siddharth, S.; Neil, I.; Shlomo, Z.; and Tianjiao, Z. 2011.
\newblock Directed Search for Generalized Plans Using Classical Planners.
\newblock In \emph{ICAPS}, volume~21.

\bibitem[{Skiena(1998)}]{skiena1998algorithm}
Skiena, S.~S. 1998.
\newblock \emph{The algorithm design manual: Text}, volume~1.
\newblock Springer Science \& Business Media.

\bibitem[{Solar-Lezama(2009)}]{solar2009sketching}
Solar-Lezama, A. 2009.
\newblock The sketching approach to program synthesis.
\newblock In \emph{Asian Symposium on Programming Languages and Systems}.

\bibitem[{Srivastava et~al.(2011)Srivastava, Zilberstein, Immerman, and
  Geffner}]{srivastava2011qualitative}
Srivastava, S.; Zilberstein, S.; Immerman, N.; and Geffner, H. 2011.
\newblock Qualitative numeric planning.
\newblock In \emph{AAAI}, volume~25.

\bibitem[{Winner and Veloso(2003)}]{Winner03distill:learning}
Winner, E.; and Veloso, M. 2003.
\newblock DISTILL: Learning Domain-Specific Planners by Example.
\newblock In \emph{ICML}, 800--807.

\end{thebibliography}
\end{document}